\useunder{\uline}{\ul}{}
\definecolor{mydarkblue}{rgb}{0,0.08,0.45}
\newcolumntype{L}[1]{>{\raggedright\let\newline\\\arraybackslash\hspace{0pt}}m{#1}}
\newcolumntype{C}[1]{>{\centering\let\newline\\\arraybackslash\hspace{0pt}}m{#1}}
\newcolumntype{R}[1]{>{\raggedleft\let\newline\\\arraybackslash\hspace{0pt}}m{#1}}
\newcommand{\argmax}{\operatornamewithlimits{argmax}}
\newcommand{\argmin}{\operatornamewithlimits{argmin}}
\let\oldnl\nl
\newcommand{\nonl}{\renewcommand{\nl}{\let\nl\oldnl}}
\newtheorem{prop}{\textbf{Proposition}}
\newtheorem{problem}{\textbf{Problem}}
\begin{document}

\title{\textbf{FLASH: Fast Bayesian Optimization for\\ Data Analytic Pipelines}
\vspace{10pt}
\date{}
\author{
Yuyu Zhang \quad Mohammad Taha Bahadori \quad Hang Su \quad Jimeng Sun\\
Georgia Institute of Technology\\
{\fontfamily{pcr}\selectfont\{yuyu,bahadori,hangsu\}@gatech.edu,jsun@cc.gatech.edu}\\
}
}

\maketitle
\begin{abstract} 
Modern data science relies on data analytic pipelines to organize interdependent computational steps. Such  analytic pipelines often involve different algorithms across multiple steps, each with its own hyperparameters. To achieve the best performance, it is often critical to select optimal algorithms and to set appropriate hyperparameters, which requires large computational efforts. 
Bayesian optimization provides a principled way for searching optimal hyperparameters for a single algorithm. However, many challenges remain in solving pipeline optimization problems with high-dimensional and highly conditional search space. In this work, we propose Fast LineAr SearcH (FLASH), an efficient method for tuning analytic pipelines. FLASH is a two-layer Bayesian optimization framework, which firstly uses a parametric model to select promising algorithms, then computes a nonparametric model to fine-tune hyperparameters of the promising algorithms. FLASH also includes an effective caching algorithm which can further accelerate the search process.
Extensive experiments on a number of benchmark datasets have demonstrated that FLASH significantly outperforms previous state-of-the-art methods in both search speed and accuracy. Using 50\% of the time budget, FLASH achieves up to 20\% improvement on test error rate compared to the baselines.
FLASH also yields state-of-the-art performance on a real-world application for healthcare predictive modeling.
 
\end{abstract}

%
%

%


\section{Introduction}
\label{sec:intro}
Modern data science often requires many computational steps such as data preprocessing, feature extraction, model building, and model evaluation, all connected in a \textit{data analytic pipeline}. Pipelines provide a natural way to represent, organize and standardize data analytic tasks, which are considered to be an essential element in the data science field~\cite{donoho2015} due to their key role in large-scale data science projects. Many machine learning toolboxes such as scikit-learn \cite{pedregosa2011scikit}, RapidMiner \cite{mierswa2006yale}, SPSS \cite{coakes2009spss}, Apache Spark \cite{mengml} provide mechanisms for configuring analytic pipelines.

\begin{figure*}[t]
\centering
\includegraphics[width=1\textwidth]{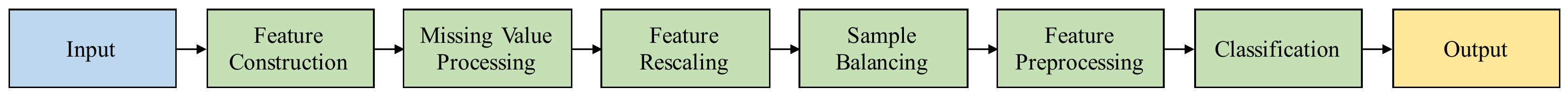}
\caption{A typical data analytic pipeline.}
\label{fig:concept_pipeline}
\end{figure*}

An analytic pipeline skeleton is shown in Figure~\ref{fig:concept_pipeline}. Each step, such as feature preprocessing and classification, includes many algorithms to choose from. These algorithms usually require users to set hyperparameters, ranging from optimization hyperparameters such as learning rate and regularization coefficients, to model design hyperparameters such as the number of trees in random forest and the number of hidden layers in neural networks. There are an exponential number of choices for the combination of algorithms and hyperparameters in a given analytic pipeline skeleton. 
Because of the interdependency between all the algorithms and their hyperparameters, the choices can have huge impact on the performance of the best model. 

Tuning hyperparameters of a single algorithm can be viewed as an optimization problem of a black-box objective function, which is noisy and often expensive to evaluate. Here the input of black-box are the hyperparameters, and the objective function is the output performance such as accuracy, precision and recall. To tackle this problem,  simple methods have been applied such as grid or random search \cite{bergstra2011algorithms,bergstra2012random}. While on difficult problems where these simple approaches are not efficient, a more promising model-based approach is Bayesian optimization \cite{mockus1978application,lizotte2008practical,brochu2010tutorial,shahriari2016taking}. The high-level idea of Bayesian optimization is to define a relatively cheap surrogate function and use that to search the hyperparameter space. Indeed, there exist other global optimization methods, such as evolutionary algorithms \cite{back1996evolutionary} and optimistic optimization \cite{munos2011optimistic}. We choose Bayesian optimization framework due to its great performance in practice. Recently, Bayesian optimization methods have been shown to outperform other methods on various tasks, and in some cases even beat human domain experts to achieve better performance via tuning hyperparameters ~\cite{snoek2015scalable,bergstra2013making}.

Despite its success, applying Bayesian optimization for tuning analytic pipelines faces several significant challenges:  Existing Bayesian optimization methods are usually based on nonparametric models, such as Gaussian process and random forest. A major drawback of these methods is that they require a large number of observations to find reasonable solutions in high-dimensional space. When tuning a single algorithm with several hyperparameters, Bayesian optimization works well with just a few observations. However, when it comes to pipeline tuning, thousands of possible combinations of algorithms plus their hyperparameters jointly create a large hierarchical high-dimensional space to search over, whereas existing methods tend to become inefficient. Wang et al.~\cite{wang2013bayesian} tackled the high-dimensional problem by making a low effective dimensional assumption. However, it is still a flat Bayesian optimization method and not able to handle the exploding dimensionality problem caused by hierarchically structured hyperparameters in analytic pipeline tuning.

\noindent{\bf Motivating example:} We build an analytic pipeline for classification task (details  in Section~\ref{sec:exp}). If we give 10 trials for each hyperparameter over 1,456 unique pipeline paths and 102 hyperparameters, we have more than 2 million configurations, which can take years to complete with a brute-force search. Even with the state-of-the-art Bayesian optimization algorithm such as Sequential Model-based Algorithm Configuration (SMAC)~\cite{hutter2011sequential}, the process can still be slow as shown in Figure~\ref{fig:gap}. If we know the optimal algorithms ahead time (Oracle) with just hyperparameter tuning of the optimal algorithms, we can obtain significant time saving, which is however not possible. Finally, our proposed method FLASH can converge towards the oracle performance much more quickly than SMAC.

In this paper, we propose a two-layer Bayesian optimization algorithm called Fast LineAr SearcH (FLASH): the first layer for selecting algorithms, and the second layer for tuning the hyperparameters of selected algorithms. 
FLASH is able to outperform the state-of-the-art Bayesian optimization algorithms by a large margin, as shown in Figure~\ref{fig:gap}. By designing FLASH, we make three main contributions:

\begin{figure}[t]
\centering
\includegraphics[width=0.6\textwidth]{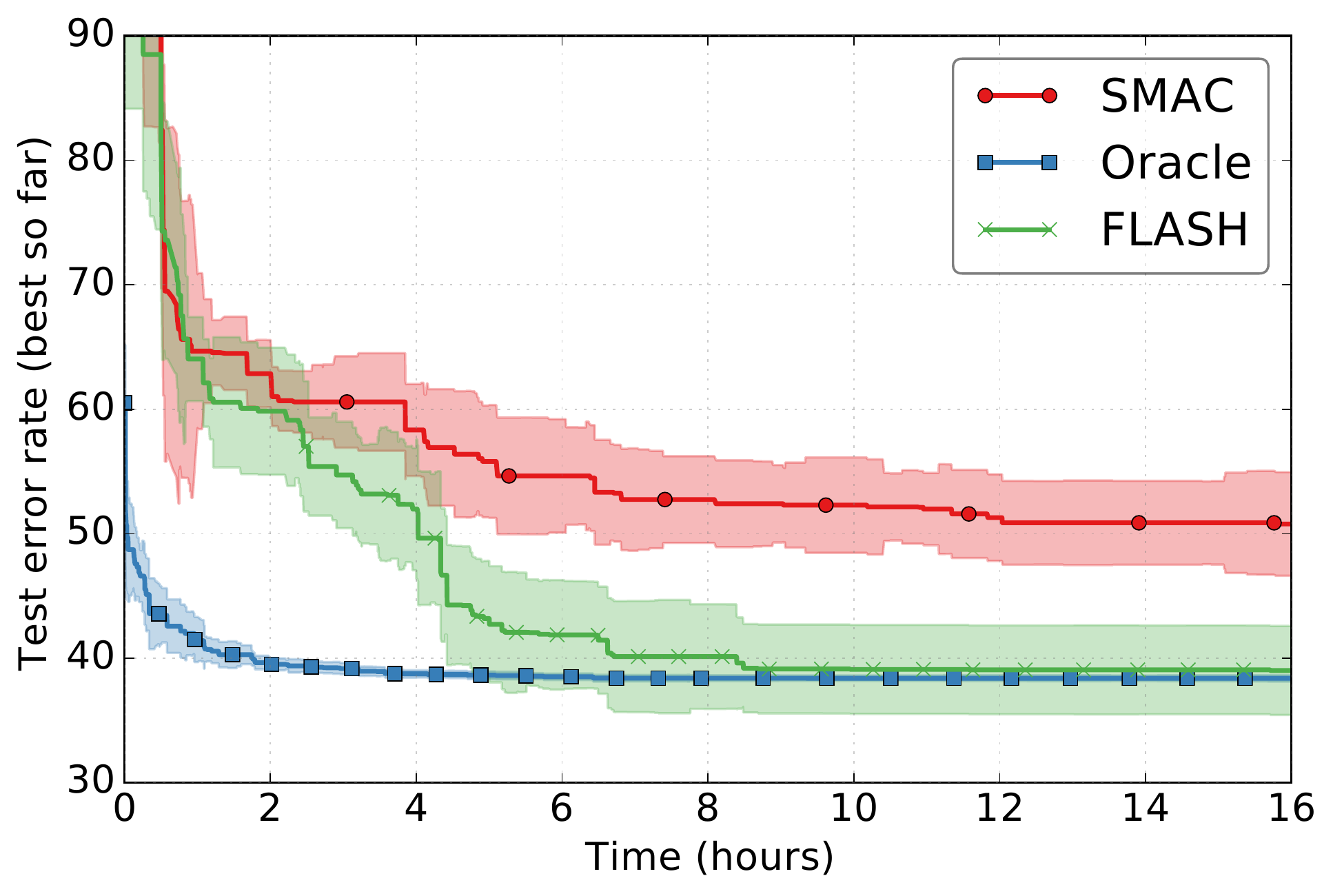}
\caption{Performance comparison of a data analytic pipeline on MRBI dataset, including the previous state of the art SMAC, proposed method FLASH, and Oracle (i.e., pretending the optimal algorithm configuration is given, and only performing hyperparameter tuning with SMAC on those algorithms). We show the median percent error rate on the test set along with standard error bars (generated by 10 independent runs) over time. FLASH outperforms SMAC by a big margin and converges toward Oracle performance quickly.}
\label{fig:gap}
\end{figure}

\begin{itemize}[leftmargin=0.15in]
\item We propose a linear model for propagation of error (or other quantitative metrics) in analytic pipelines. We also propose a Bayesian optimization algorithm for minimizing the aggregated error using our linear error model. Our proposed mechanism can be considered as a hybrid model: a parametric linear model for fast exploration and pruning of the algorithm space, followed by a nonparametric hyperparameter fine-tuning algorithm.
\item We propose to initialize the hyperparameter tuning algorithm using the optimal design strategy \cite{atkinson2012optimum,flaherty2005robust,Settles2012} which is more robust than the random initialization. We also propose a fast greedy algorithm to efficiently solve the optimal design problem for any given analytic pipeline.
\item Finally, we introduce a caching algorithm that can significantly accelerate the tuning process. In particular, we model the (time) cost of each algorithm, and incorporate that in the optimization process. This ensures the efficiency of fast search.
\end{itemize}
 
We demonstrate the effectiveness of FLASH with extensive experiments on a number of difficult problems. On the benchmark datasets for pipeline configurations tuning, FLASH substantially improves the previous state of the art by 7\% to 25\% in test error rate within the same time budget. We also experiment with large-scale real-world datasets on healthcare data analytic tasks where FLASH also exhibits superior results.

\section{Background and Related Work}
\label{sec:back}
\subsection{Data Analytic Pipelines}

The data analytic pipeline refers to a framework consisting of a sequence of computational transformations on the data to produce the final predictions (or outputs) \cite{kumar2015model}.  Pipelines help users better understand and organize the analysis task, as well as increase the reusability of algorithm implementations in each step.
Several existing widely adopted machine learning toolboxes provide the functionality to run analytic pipelines. Scikit-learn \cite{pedregosa2011scikit} and Spark ML \cite{mengml} provide programmatic ways to instantiate a pipeline. SPSS \cite{coakes2009spss} and RapidMiner \cite{mierswa2006yale} provide a visual way to assemble an analytic pipeline instance together and run. Microsoft Azure Machine Learning\footnote{\url{https://studio.azureml.net}} provides a similar capability in a cloud setting. There are also specialized pipelines, such as PARAMO \cite{ng2014paramo} in healthcare data analysis.

However, a major difficulty in using these systems is that none of the above described tools is able to efficiently help users decide which algorithms to use in each step. Some of the tools such as scikit-learn, Spark ML, and PARAMO allow searching all possible pipeline paths and tuning the hyperparameters of each step using an expensive grid search approach. While the search process can be sped up by running in parallel, the search space is still too large for the exhaustive search algorithms.

\subsection{Bayesian Optimization}

Bayesian optimization is a well-established technique for global and black-box optimization problems. In a nutshell, it comprises two main components: a probabilistic model and an acquisition function. For the probabilistic model, there are several popular choices: Gaussian process \cite{snoek2012practical,snoek2015scalable}, random forest such as Sequential Model-based Algorithm Configuration (SMAC) \cite{hutter2011sequential}, and density estimation models such as Tree-structured Parzen Estimator (TPE) \cite{bergstra2011algorithms}. Given any of these models, the posterior mean and variance of a new input can be computed, and used for computation of the acquisition function. The acquisition function defines the criterion to determine future input candidates for evaluation. Compared to the objective function, the acquisition function is chosen to be relatively cheap to evaluate, so that the most promising next input for querying can be found quickly. Various forms of acquisition functions have been proposed \cite{srinivas2009gaussian,hoffman2011portfolio,villemonteix2009informational,hoffman2014correlation}. One of the most prominent acquisition function is the \emph{Expected Improvement} (EI) function \cite{mockus1978application}, which has been widely used in Bayesian optimization. In this work, we use EI as our acquisition function, which is formally described in Section~\ref{sec:method}.

Bayesian optimization is known to be successful in tuning hyperparameters for various learning algorithms on different types of tasks \cite{snoek2015scalable,feurer2015initializing,bergstra2013making,snoek2012practical,wang2013bayesian}. Recently, for the problem of pipeline configurations tuning, several Bayesian optimization based systems have been proposed: Auto-WEKA \cite{thornton2013auto} which applies SMAC \cite{hutter2011sequential} to WEKA \cite{hall2009weka}, auto-sklearn \cite{feurer2015efficient} which applies SMAC to scikit-learn \cite{pedregosa2011scikit}, and hyperopt-sklearn \cite{komer2014hyperoptsklearn} which applies TPE \cite{bergstra2011algorithms} to scikit-learn. The basic idea of applying Bayesian optimization to pipeline tuning is to expand the hyperparameters of all algorithms and create large search space to perform optimization as we will show in the experiments. However, for practical pipelines the space becomes too large which hinders convergence of the optimization process. Auto-sklearn \cite{feurer2015efficient} uses a meta-learning algorithm that leverages performance history of algorithms on existing datasets to reduce the search space. However, in real-world applications, we often have unique datasets and tasks such that finding similar datasets and problems for the meta-learning algorithm will be difficult.

\section{Methodology}
\label{sec:method}
A data analytic pipeline $G = (V, E)$ can be represented as a multi-step Directed Acyclic Graph (DAG), where $V$ is the set of algorithms, and $E$ is the set of directed edges indicating dependency between algorithms. Algorithms are distributed among multiple steps. Let $V^{(k)}_i$ denote the $i$th algorithm in the $k$th step. Each directed edge $(V^{(k)}_i, V^{(k+1)}_j) \in E$ represents the connection from algorithm $V^{(k)}_i$ to $V^{(k+1)}_j$. Note that there is no edge between algorithms in the same step.
We also have an input data vertex $V_{in}$ which points to all algorithms in the first step, and an output vertex $V_{out}$ which is pointed by all algorithms in the last step. 

A \textit{pipeline path} is any path from the input $V_{in}$ to the output $V_{out}$ in pipeline graph $G$ . To denote a pipeline path of $K$ steps, we use $K$ one-hot vectors $\bm{p}^{(k)}$ ($1 \leq k\leq K$), each denoting the algorithm selected in the $k$-th step. Thus, the concatenation of one-hot vectors $\bm{p} = \big[ \bm{p}^{(1)}, \ldots, \bm{p}^{(K)} \big] \in \{0,1\}^N$ denotes a pipeline path, where $N$ is the total number of algorithms in the pipeline $G$. Figure~\ref{fig:sample_pipeline} shows a small data analytic pipeline with two steps. The first step contains two algorithms, and the second step contains three. One possible pipeline path is highlighted in the shaded area. On this pipeline path, $V^{(1)}_2$ and $V^{(2)}_3$ are selected in the first and second step, so that we have $\bm{p}^{(1)} = [0, 1]$ and $\bm{p}^{(2)} = [0, 0, 1]$. Thereby, the highlighted pipeline path is given by $\bm{p} = \big[ \bm{p}^{(1)}, \bm{p}^{(2)} \big] = [0, 1, 0, 0, 1]$. For any pipeline path $\bm{p}$, we concatenate all of its hyperparameters in a vector $\bm{\lambda}_{\bm{p}}$. The pair of path and hyperparameters, i.e. $(\bm{p}, \bm{\lambda}_{\bm{p}})$, forms a pipeline configuration to be run. For ease of reference, we list the notations in Table~\ref{table:notation}.
      
\begin{figure}[t]
\centering
\includegraphics[width=0.6\textwidth]{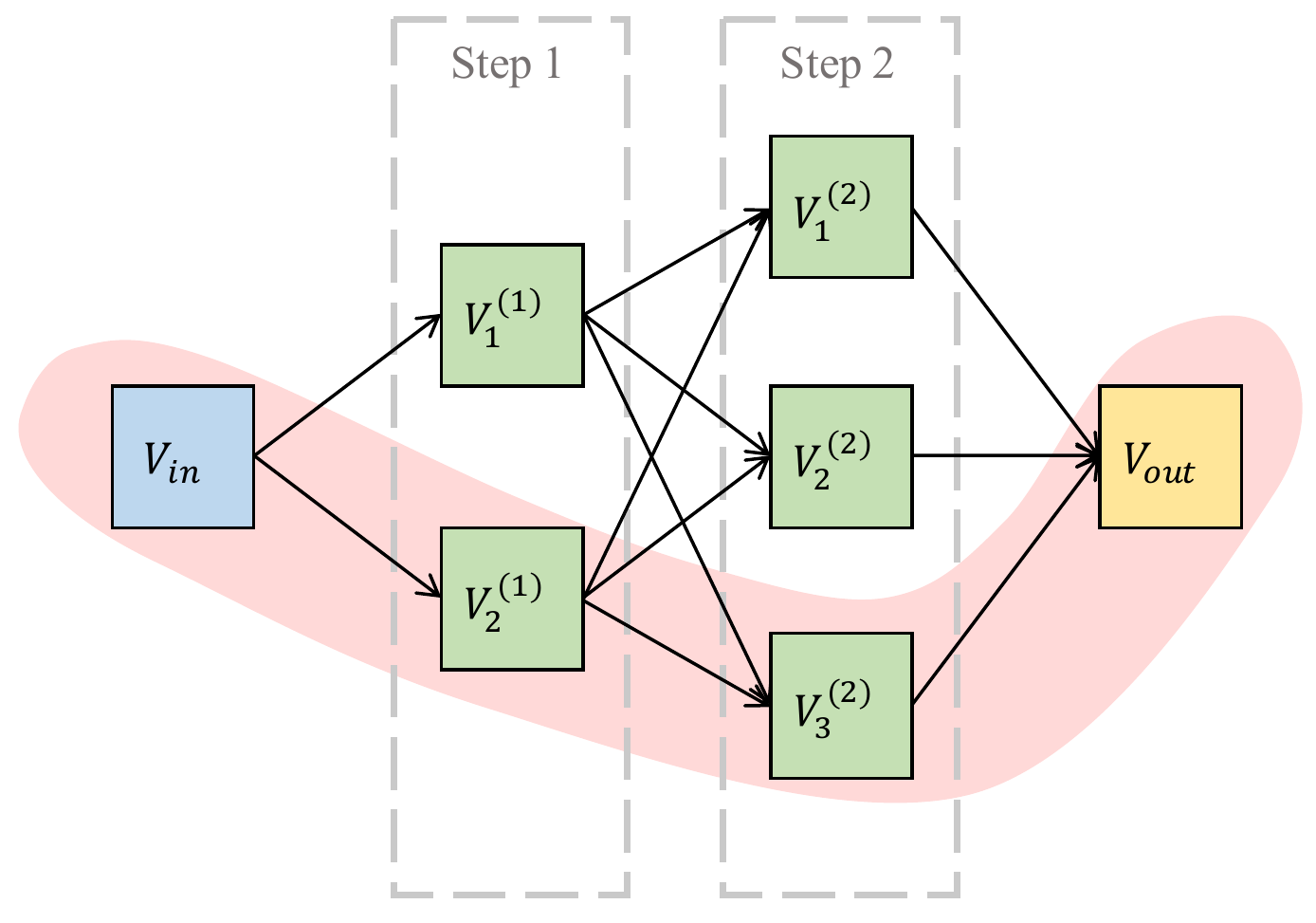}
\caption{A toy example of data analytic pipeline. One possible pipeline path, flowing from the input $V_{in}$ to the output $V_{out}$, is highlighted in shaded area.}
\label{fig:sample_pipeline}
\end{figure}

The problem of tuning data analytic pipelines can be formalized as an optimization problem:

\begin{problem}
\label{def}
Given a data analytic pipeline $G$ with input data $\mathcal{D}$, resource budget $T$, evaluation metric function $m(G,\mathcal{D};\bm{p},\bm{\lambda}_{\bm{p}})$, resource cost of running pipeline $\tau(G,\mathcal{D};\bm{p},\bm{\lambda}_{\bm{p}})$, how to find the pipeline path $\bm{p}$ and its hyperparameters $\bm{\bm{\lambda}}_{\bm{p}}$ with best performance $m^\star$?  
\end{problem}
The performance of the best pipeline path is denoted by $m^{\star} = \min_{\bm{p}, \bm{\lambda}_{\bm{p}}} m(G,\mathcal{D};\bm{p},\bm{\lambda}_{\bm{p}})$ subject to budget $T$. The objective is to approach the optimal performance within the budget $T$ via optimizing over $\bm{p}, \bm{\lambda}_{\bm{p}}$; i.e., we would like our solution $\widehat{\bm{p}}, \widehat{\bm{\lambda}}_{\widehat{\bm{p}}}$ to satisfy 
$m(G,\mathcal{D};\widehat{\bm{p}}, \widehat{\bm{\lambda}}_{\widehat{\bm{p}}}) \leq m^{\star} + \epsilon$ for small values of $\epsilon$.

To efficiently tackle this problem, we propose a two-layer Bayesian optimization approach named Fast LineAr SearcH (FLASH). We generally introduce the idea of linear model and describe the algorithm in Section~\ref{sec:linear}. An immediate advantage of using linear model is that we can use more principled initialization instead of random initialization, as discussed in Section~\ref{sec:optimal}. We use cost-sensitive modeling to prune the pipeline, as described in Section~\ref{sec:costsensitive}. Finally, we accelerate the entire optimization procedure via pipeline caching, which we describe in Section~\ref{sec:caching}.

\begin{table}[t]
\centering
\caption{Mathematical notations used in this paper.}
\label{table:notation}
\vspace{5pt}
\begin{tabular}{ll}
\toprule
Symbol    & Description                  \\ \toprule
$G$       & data analytic pipeline       \\
$V$       & set of algorithms in $G$       \\
$E$       & set of dependency between algorithms \\
$K$       & total number of steps in $G$ \\
$N$       & total number of algorithms in $G$ \\
$\mathcal{D}$	& input data of pipeline \\
$V_{in}$  & input vertex of $G$                 \\
$V_{out}$ & output vertex of $G$                \\
$V_i^{(k)}$ & $i$th algorithm in $k$th step \\
$\bm{p}^{(k)}$ & one-hot vector for $k$th step \\
$\bm{\lambda}_{\bm{p}}^{(k)}$ & hyperparameters for $k$th step \\
$\bm{p}$  & pipeline path \\
$\bm{\lambda}_{\bm{p}}$ & all hyperparameters of $\bm{p}$ \\
$m(\cdot)$ & evaluation metric function \\
$\tau(\cdot)$ & time cost of running pipeline \\
$T_{\text{init}}$ & budget for Phase 1 \\
$T_{\text{prune}}$ & budget for Phase 2 \\
$T_{\text{total}}$ & budget for Phase 3 \\
\bottomrule
\end{tabular}
\end{table}

\subsection{Two-layer Bayesian Optimization}
\label{sec:linear}
Inspired by the performance of linear regression under model misspecification \cite{white1982maximum,flynn2013efficiency,lv2014model} and superior sample complexity compared to more flexible nonparametric techniques \cite{wasserman2006all}, we seek parametric models for propagation of error (or other quantitative metrics) in analytic pipelines.
The high level idea of FLASH is as follows: we propose a linear model for estimating the propagation of error (or any other metric) in a given analytic pipeline. 
The linear model assumes that the performance of algorithms in different steps are independent, and the final performance is additive from all algorithms.  That is, we can imagine that each algorithm is associated with a performance metric, and the total performance of a pipeline path is the sum of the metrics for all algorithms in the path.
This linear model will replace the Gaussian process or random forest in the initial stages of the pipeline tuning process.  In the rest of this section, we provide the details of Bayesian optimization with our linear model.

We apply the linear model only to the pipeline selection vector $\bm{p}$ and assume that the variations due to hyperparameters of the algorithms are captured in the noise term.  That is, we assume that the error of any pipeline path $\bm{p}$ can be written as 
\begin{equation*}
m = \bm{\beta}^{\top}\bm{p} + \varepsilon
\end{equation*}
\noindent where $\bm{\beta}\in \mathbb{R}^{N}$ denotes the parameters of the linear model. 
Given a set of observations of the algorithm selection and the corresponding evaluation metric for the selected pipeline path in the form of $(\bm{p}_i, m_i)$, $i = 1, \ldots, n$, we can fit this model and infer its mean $\mu(\bm{p})$ and variance $\sigma^2(\bm{p})$ of the performance estimation for any new pipeline path represented by $\bm{p}$. 
In particular, let the design matrix $\bm{P} \in \mathbb{R}^{n \times N}$ denote the stacked version of the pipeline paths, i.e.,  $\bm{P} = [\bm{p}_1, \ldots, \bm{p}_n]^{\top}$, and $\bm{m} \in \mathbb{R}^n$ be the corresponding response values of the evaluation metrics, $\bm{m} = [m_1, \ldots, m_n]$. We use the following $L_2$ regularized linear regression to obtain the robust estimate for $\bm{\beta}$ from history observations:
\begin{equation}
\widehat{\bm{\beta}}(\bm{P}, \bm{m}) = \argmin_{\bm{\beta}}\left\{\frac{1}{n}\left\|\bm{P}\bm{\beta} - \bm{m} \right\|_2^2 + \lambda \|\bm{\beta}\|_2^2\right\}
\label{eq:ridge}
\end{equation}
\noindent where for any vector $\bm{x} \in \mathbb{R}^n$ the $L_2$ norm is defined as $\|\bm{x}\|_2 = \sqrt{\left(\sum_{i=1}^nx_i^2 \right)}$.
The predictive distribution for the linear model is Gaussian with mean $\widehat{\mu}_{\bm{p}} = \widehat{\bm{\beta}}^{\top}\bm{p}$ and variance $\widehat{\sigma}_{\bm{p}} = \sigma_{\varepsilon}^2(1+\bm{p}^{\top}(\bm{P}^{\top}\bm{P}+\lambda\bm{I})^{-1}\bm{p})$ where $\sigma_{\varepsilon}^2$ is the variance of noise in the model. 
We estimate $\sigma_{\varepsilon}$ as follows:
the residual in the $i$th observation is computed as $\widehat{\epsilon}_i = \widehat{\mu}_i - m_i$ where $\widehat{\mu}_i = \widehat{\bm{\beta}}^{\top}\bm{p}_i$ is the estimate of $m_i$ by our model. Thus, the variance of the residual can be found as $\widehat{\sigma}_\epsilon^2 = var(\widehat{\mu}_i - m_i)$ where $var(\cdot)$ denotes the variance operator.

\begin{algorithm}[!h]
\caption{Fast Linear Search (FLASH)}
\label{alg:flash}
\DontPrintSemicolon
\SetKwInOut{Input}{input}\SetKwInOut{Output}{output}

\Input{Data analytic pipeline $G$; input data $\mathcal{D}$; total budget for entire optimization $T_{\text{total}}$; budget for initialization $T_{\text{init}}$; budget for pipeline pruning $T_{\text{prune}}$; number of top pipeline paths $r$}
\Output{Optimized pipeline configuration $\widehat{\bm{p}}$ and $\widehat{\bm{\lambda}}_{\bm{\widehat{p}}}$}

\tcc{Phase 1: Initialization (Section~\ref{sec:optimal})}
\While{budget $T_{\text{init}}$ not exhausted}{
	 $\bm{p} \gets$ new pipeline path from Algorithm \ref{alg:initial} \label{line:yield} \;
	$\bm{\lambda}_{\bm{p}} \gets$ random hyperparameters for $\bm{p}$ \;
	$m, \tau \gets$ RunPipeline($G, \mathcal{D}; \bm{p}, \bm{\lambda}_{\bm{p}}$) with Algorithm~\ref{alg:caching}\;
	$\bm{P} \gets [\bm{P}; \bm{p}^{\top}]$, 
		   $\bm{m} \gets [\bm{m}, m]$,
		   $\bm{\tau} \gets [\bm{\tau}, \tau]$ \;
	$\bm{\beta} \gets \widehat{\bm{\beta}}(\bm{P}, \bm{m})$,
		   $\bm{\beta}_\tau \gets \widehat{\bm{\beta}_\tau}(\bm{P}, \bm{\tau})$ using Eq.~(\ref{eq:ridge}) \;
}

\tcc{Phase 2: Pipeline pruning (Section~\ref{sec:costsensitive})}
\While{budget $T_{\text{prune}}$ not exhausted}{
	$\bm{p} \gets \argmax_{\bm{p}}EIPS(\bm{p}, \bm{P}, \bm{\beta}, \bm{\beta}_\tau)$ using Eq.~(\ref{eq:EIPS}) \;
	$\bm{\lambda}_{\bm{p}} \gets$ random hyperparameters for $\bm{p}$ \;
	$m, \tau \gets$ RunPipeline($G, \mathcal{D}; \bm{p}, \bm{\lambda}_{\bm{p}}$) with Algorithm~\ref{alg:caching} \;
	$\bm{P} \gets [\bm{P}; \bm{p}^{\top}]$,
		   $\bm{m} \gets [\bm{m}, m]$,
		   $\bm{\tau} \gets [\bm{\tau}, \tau]$ \;
	$\bm{\beta} \gets \widehat{\bm{\beta}}(\bm{P}, \bm{m})$,
		   $\bm{\beta}_\tau \gets \widehat{\bm{\beta}}_\tau(\bm{P}, \bm{\tau})$ using Eq.~(\ref{eq:ridge}) \; 
}
$G' \gets$ construct subgraph of $G$ with top $r$ pipeline paths with largest $EIPS(\bm{p}, \bm{P}, \bm{\beta}, \bm{\beta}_\tau)$ using Eq.~(\ref{eq:EIPS}) \label{line:prune}\;

\tcc{Phase 3: Pipeline tuning}
$S \gets$ history observations within $G'$ \;
Initialize model $\mathcal{M}$ given $S$ \; 
\While{budget $T_{\text{total}}$ not exhausted}{
	$\bm{p}, \bm{\lambda}_{\bm{p}} \gets$ next candidate from $\mathcal{M}$ \;
	$m \gets$ RunPipeline($G', \mathcal{D}; \bm{p}, \bm{\lambda}_{\bm{p}}$) with Algorithm~\ref{alg:caching} \;
	$S \gets S \cup \{(\bm{p}, \bm{\lambda}_{\bm{p}}, m)\}$ \;
	Update $\mathcal{M}$ given $S$ \;
}

$\widehat{\bm{p}}$, $\widehat{\bm{\lambda}}_{\bm{\widehat{p}}} \gets$ Best configuration so far found for $G'$ \;

\end{algorithm}

To perform Bayesian optimization with linear model, we use the popular \textit{Expected Improvement} (EI) criteria, which recommends to select the next sample $\bm{p}_{t+1}$ such that the following acquisition function is maximized. The acquisition function represents the expected improvement over the best observed result $m^+$ at a new pipeline path $\bm{p}$ \cite{thornton2013auto}:
\begin{align}
\label{eq:EI}
EI(\bm{p}) =  \mathbb{E}[I_{m^+}(\bm{p})]  = \sigma_{\bm{p}} [u\Phi(u) + \phi(u)]
\end{align}
where $\quad u = \frac{m^+ - \xi - \mu_{\bm{p}}}{\sigma_{\bm{p}}}$ and $\xi$ is a parameter to balance the trade-off between exploitation and exploration. EI function is maximized for paths with small values of $m_{\bm{p}}$ and large values of $\sigma_{\bm{p}}$, reflecting the exploitation and exploration trade-offs, respectively.
To be more specific, larger $\xi$ encourages more exploration in selecting the next sample. The functions $\Phi(\cdot)$ and $\phi(\cdot)$ represent CDF and PDF of standard normal distribution, respectively.
The idea of Bayesian optimization with EI is that at each step, we compute the EI with the predictive distribution of the existing linear model and find the pipeline path that maximizes EI. We choose that path and run the pipeline with it to obtain a new $(\bm{p}_i, m_i)$ pair. We use this pair to refit and update our linear model and repeat the process. Later on we also present an enhanced version of EI via normalizing it by cost called \textit{Expected Improvement Per Second} (EIPS).

We provide the full details of FLASH in Algorithm~\ref{alg:flash}. While the main idea of FLASH is performing Bayesian optimization using linear model and EI, it has several additional ideas to make it practical.  Specifically, FLASH has three phases:
\begin{itemize}
\item Phase 1, we initialize the algorithm using ideas from optimal design, see Section~\ref{sec:optimal}. The budget is bounded by $T_{\text{init}}$. 
\item Phase 2, we leverage Bayesian optimization to find the top $r$ best pipeline paths and prune the pipeline $G$ to obtain simpler one $G'$. The budget is bounded by $T_{\text{prune}}$\footnote{In practice, it is better to use the time normalized EI (that is EIPS) during Phase 2; this idea is described in Section~\ref{sec:costsensitive}.}.
\item Phase 3, we use general model-based Bayesian optimization methods to fine-tune the pruned pipeline together with their hyperparameters.  
\end{itemize}
In Phase 3, we use state-of-the-art Bayesian optimization algorithm, either SMAC or TPE. These algorithms are iterative: they use a model $\mathcal{M}$ such as Gaussian process or random forest and use EI to pick up a promising pipeline path with hyperparameters for running, and then update the model with the new observation just obtained, and again pick up the next one for running.
The budget is bounded by $T_{\text{total}}$. 
Note that our algorithm is currently described for a sequential setting but can be easily extended to support parallel runs of multiple pipeline paths as well.

\subsection{Optimal Design for Initialization}
\label{sec:optimal}
Most Bayesian optimization algorithms rely on random initialization which can be inefficient; for example, it may select duplicate pipeline paths for initialization. Intuitively, the pipeline paths used for initialization should cover the pipeline graph well, such that all algorithms are included enough times in the initialization phase. The ultimate goal is to select a set of pipeline paths for initialization such that the error in estimation of $\bm{\beta}$ is minimized. 
Given our proposed linear model, we can find the optimal strategy for initialization to make sure the pipeline graph is well covered and the tuning process is robust. In this section, we describe different optimality criteria studied in statistical experiment design \cite{atkinson2012optimum,flaherty2005robust} and active learning  \cite{Settles2012}, and design an algorithm for initialization step of FLASH. 

\begin{algorithm}[t]
\caption{Initialization with Optimal Design}
\label{alg:initial}
\DontPrintSemicolon
\SetKwInOut{Input}{input}\SetKwInOut{Output}{output}

\tcc{\underline{Batch version}}
\Input{$B$ initial candidates $\{\bm{p}_i\}_{i=1}^{B}$; number of desired pipeline paths $n_{\text{init}}$  }
\Output{ Optimal set of pipeline paths $Q$ }

$\bm{p}_1 \gets$ random pipeline path for initialization \;
$\bm{H} \gets \bm{p}_1\bm{p}_1^{\top}$ \;
$Q \gets \{\bm{p}_1\}$ \;
\For{$\ell = 2, \ldots, n_{\text{init}}$}{
    $j^{\star} \gets \argmax_{j}D_\ell(\bm{H} + \bm{p}_j\bm{p}_j^{\top})$ for $j=1, \ldots, B$. \;
    $\bm{H} \gets \bm{H} + \bm{p}_{j^{\star}}\bm{p}_{j^{\star}}^{\top}$ \;
    $Q \gets Q\cup\{\bm{p}_{j^{\star}}\}$ \;
}

\nonl \hrulefill

\tcc{\underline{Online version}}
\Input{$B$ candidates $\{\bm{p}_i\}_{i=1}^{B}$; current Gram matrix $\bm{H}$}
\Output{Next pipeline path $\bm{p}_{j^{\star}}$, $j^{\star} \in \{1, \ldots, B\}$}

$j^{\star} \gets \argmax_{j}D_\ell(\bm{H} + \bm{p}_j\bm{p}_j^{\top})$ for $j=1, \ldots, B$ \;

\end{algorithm}

Given a set of pipeline paths with size $n$, there are several different optimality criteria in terms of the eigenvalues of the Gram matrix $\bm{H} = \sum_{i=1}^{n}\bm{p}_i\bm{p}_i^{\top}$ as follows \cite{Settles2012}:
\begin{description}
\item[A-optimality:] maximize $\sum_{\ell=1}^{n}\lambda_{\ell}(\bm{H})$.
\item[D-optimality:] maximize $\prod_{\ell=1}^{n}\lambda_{\ell}(\bm{H})$.
\item[E-optimality:] maximize $\lambda_{n}(\bm{H})$, the $n$th largest eigenvalue.
\end{description}
It is easy to see that any arbitrary set of pipeline path designs satisfies the A-optimality criterion. 
\begin{prop}
Any arbitrary set of pipeline paths with size $n$ is a size-$n$ A-optimal design.
\label{prop:Aoptimal}
\end{prop}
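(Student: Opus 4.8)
The plan is to establish that the A-optimality objective $\sum_{\ell=1}^{n}\lambda_{\ell}(\bm{H})$ takes the \emph{same} value for every size-$n$ design, so that each design trivially attains the maximum. The crux is to identify this quantity with the trace of the Gram matrix. Since $\bm{H} = \sum_{i=1}^{n}\bm{p}_i\bm{p}_i^{\top}$ is a sum of $n$ rank-one positive semidefinite matrices, it has rank at most $n$; ordering eigenvalues in decreasing order, the top $n$ therefore capture all nonzero eigenvalues, and so $\sum_{\ell=1}^{n}\lambda_{\ell}(\bm{H}) = \operatorname{tr}(\bm{H})$.

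First I would expand the trace using its linearity and cyclic property,
\begin{equation*}
\operatorname{tr}(\bm{H}) = \sum_{i=1}^{n}\operatorname{tr}\!\left(\bm{p}_i\bm{p}_i^{\top}\right) = \sum_{i=1}^{n}\bm{p}_i^{\top}\bm{p}_i = \sum_{i=1}^{n}\|\bm{p}_i\|_2^2,
\end{equation*}
so that the objective depends on the design only through the squared norms of the chosen paths.

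Next I would evaluate $\|\bm{p}_i\|_2^2$ from the structure of a pipeline path. Each path is a concatenation $\bm{p}_i = [\bm{p}_i^{(1)}, \ldots, \bm{p}_i^{(K)}]$ of $K$ one-hot vectors, one per step, and a one-hot vector has unit squared norm; hence $\|\bm{p}_i\|_2^2 = \sum_{k=1}^{K}\|\bm{p}_i^{(k)}\|_2^2 = K$ for \emph{every} feasible path. Substituting back yields $\operatorname{tr}(\bm{H}) = nK$, a constant independent of which paths are selected. Because all size-$n$ designs share this common objective value $nK$, they simultaneously attain the maximum, which proves that any arbitrary set of $n$ pipeline paths is A-optimal.

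I do not expect a genuine obstacle, as the argument reduces to a one-line trace computation. The only point demanding care is the claim that every \emph{feasible} path has the same norm: this rests entirely on the per-step one-hot encoding, and would fail for configurations violating it (e.g., selecting two algorithms within a single step). Thus the proposition is best read as a statement about the constrained design space induced by the pipeline structure, and it explains why A-optimality is uninformative here, motivating the D-optimality criterion used in Algorithm~\ref{alg:initial}.
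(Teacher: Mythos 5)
Your proof is correct and follows essentially the same route as the paper's: identify the A-optimality objective with $\mathrm{tr}(\bm{H})$, expand by linearity, and use the one-hot per-step structure to get $\|\bm{p}_i\|_2^2 = K$ and hence the constant value $nK$. Your only addition is the explicit rank-at-most-$n$ argument justifying $\sum_{\ell=1}^{n}\lambda_{\ell}(\bm{H}) = \mathrm{tr}(\bm{H})$, a step the paper states without comment, so this is a slightly more careful rendering of the same proof.
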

\begin{proof}
For any arbitrary set of pipeline paths with size $n$, we have:
\begin{align*}
\sum_{\ell=1}^{n}\lambda_{\ell}(\bm{H})& = \mathrm{tr}(\bm{H}) = \mathrm{tr}\left(\sum_{i=1}^{n}\bm{p}_i\bm{p}_i^{\top} \right) = \sum_{i=1}^{n}\mathrm{tr}\left(\bm{p}_i\bm{p}_i^{\top} \right) = nK.
\end{align*}
The last step is due to particular pattern of $\bm{p}$ in our problem.  Thus, we show that $\sum_{\ell=1}^{n}\lambda_{\ell}(\bm{H})$ is constant, independent of the design of pipeline paths. 
\end{proof}

Proposition \ref{prop:Aoptimal} rules out use of A-optimality in pipeline initialization. 
Given the computational complexity of \mbox{E-optimality} and the fact that it intends for optimality in the extreme cases, we choose \mbox{D-optimality} criterion. The \mbox{D-optimality} criterion for design of optimal linear regression can be stated as follows: suppose we are allowed to evaluate $n_{\mathrm{init}}$ samples $\bm{p}_i$, $i=1, \ldots, n_{\mathrm{init}}$, these samples should be designed such that the determinant of the Gram matrix $\bm{H}$ is maximized. 
While we can formulate an optimization problem that directly finds $\bm{p}_i$ values, we found that an alternative approach can be computationally more efficient.  In this approach, we first generate $B$ candidate pipeline paths for an integer $B$ larger than the number of algorithms in the pipeline $N$. This set may include all possible pipeline paths if the total number of paths is small. Then, our goal becomes selecting a subset of size $n_{\mathrm{init}}$ from them. 
We can formulate the optimal design as follows
\begin{align}
\bm{a}^{\star}& = \argmax_{\bm{a}}~ \det\left\{ \sum_{i=1}^{B}a_i\bm{p}_i\bm{p}_i^{\top} \right\}
\label{eq:optimalDesign}\\
s.t. &\qquad \bm{a}\in \{0, 1\}^{K}, \qquad \bm{1}^{\top}\bm{a} = n_{\mathrm{init}}.\nonumber
\end{align}
The last constraint $\bm{1}^{\top}\bm{a} = n_{\mathrm{init}}$ indicates that only $n_{\mathrm{init}}$ pipeline paths should be selected.
The objective function is concave in terms of continuous valued $\bm{a}$ \cite[Chapter 3.1.5]{boyd2004convex}. Thus, a traditional approach is to solve it by convex programming after relaxation of the integrality constraint on $\bm{a}$. The matrix in the argument of the determinant is only $N$-dimensional which means calculation of the determinant should be fast. Nestrov's accelerated gradient descent \cite{nesterov2007gradient} or Frank-Wolfe's \cite{jaggi2013revisiting} algorithms can be used for efficiently solving such problems. 

An even faster solution can be found by using greedy forward selection ideas which are fast and popular for optimal experiment design, for example see \cite{robertazzi1989accelerated,he2010laplacian,krause2011submodularity} and the references therein. To apply greedy technique to our problem, we initialize the solution by picking one of the pipeline path $\bm{H} = \bm{p}_i\bm{p}_i^{\top}$. Then, at $\ell$th step, we add the path that maximizes $j^{\star} = \argmax_{j}D_\ell(\bm{H} + \bm{p}_j\bm{p}_j^{\top})$ where $D_\ell(\bm{H}) = \prod_{i=1}^{\min(\ell,p)}\lambda_i(\bm{H})$ denotes the product of top $\min(\ell,p)$ eigenvalues of its argument. The algorithm is described in Algorithm~\ref{alg:initial}. The optimization problem in Eq.~(\ref{eq:optimalDesign}) appears in other fields such as optimal facility location and sensor planning where greedy algorithm is known to have a $1-\frac{1}{e}$ approximation guarantee \cite{calinescu2011maximizing,shamaiah2010greedy}.

One further desirable property of the greedy algorithm is that it is easy to run it under a time budget constraint. We call this version the online version in Algorithm \ref{alg:initial}, where instead of a fixed number of iteration $n_{\mathrm{init}}$, we run it until the exhaustion of our time budget. See Line \ref{line:yield} in Algorithm~\ref{alg:flash} and the online version of Algorithm~\ref{alg:initial}.

\subsection{Cost-sensitive Modeling}
\label{sec:costsensitive}
The \textit{Expected Improvement} aims at approaching the true optimal (doing well) within a small number of function evaluations (doing fast). However, the time cost of each function evaluation may vary a lot due to different settings of hyperparameters. This problem is particularly highlighted in pipeline configurations tuning, since the choice of algorithms can make a huge difference in running time. Therefore, fewer pipeline runs are not always ``faster'' in terms of wall-clock time. Also, in practice, what we care about is the performance we can get within limited resource budget, rather than within certain evaluation times. That is why we need cost-sensitive modeling  for the pipeline tuning problem.

\emph{Expected Improvement Per Second} (EIPS) \cite{snoek2012practical} proposes another acquisition function for tuning of a single learning algorithm by dividing the EI of each hyperparameter by its runtime. 
To apply EIPS in pipeline tuning problem, we use a separate linear model to model the total runtime of pipeline paths. Similar to the linear model for error propagation, the linear model for time assumes that on a pipeline path each algorithm partly contributes to the total time cost and the runtimes are additive. To apply the linear model, we replace the performance metric $\bm{m}$ with the cost metric $\bm{\tau}$. The linear cost model parametrized by $\bm{\beta_{\tau}}$ can be efficiently updated using Eq.~(\ref{eq:ridge}). As described in Algorithm~\ref{alg:flash}, $\bm{\beta_{\tau}}$ will be updated together with $\bm{\beta}$ at the end of Phase 2. We note that, in practice, the budget $T$ and the cost $\tau(\cdot)$ can be any quantitative costs of budgeted resources (e.g., money, CPU time), which is a natural generalization of our idea.

With the cost model above, we get the cost-sensitive acquisition function over the best observed result $m^+$ at a new pipeline path $\bm{p}$:
\begin{align}
\label{eq:EIPS}
& EIPS(\bm{p}, \bm{P}, \bm{\beta}, \bm{\beta}_\tau) 
= \frac{\mathbb{E}[I_{m^+}(\bm{p})]}{\mathbb{E}[\log{\tau(\bm{p})}]} 
= \frac{\sigma_{\bm{p}} [u\Phi(u) + \phi(u)]}{\mathbb{E}[\log{\tau(\bm{p})}]} \\
& \text{where} \quad u = \frac{m^+ - \xi - \mu_{\bm{p}}}{\sigma_{\bm{p}}}. \nonumber
\end{align}
Here the dependency in $\bm{\beta}$ and $\bm{\beta}_\tau$ is captured during computation of $\mu_{\bm{p}}$, $\sigma_{\bm{p}}$, and $\tau(\bm{p})$. 
We take logarithm of cost $\tau(\cdot)$ to compensate the large variations in the runtime of different algorithms.
This acquisition function balances ``doing well'' and ``doing fast'' in selecting the next candidate path to run. During the optimization, it will help avoid those costly paths with poor expected improvement. More importantly, at the end of Phase 2 in Algorithm~\ref{alg:flash}, EIPS is responsible to determine the most promising paths, which perform better but cost less, to construct a subgraph for the last phase fine-tuning. For this purpose, we set the exploration parameter $\xi$ to 0 to only select (Line \ref{line:prune} in Algorithm \ref{alg:flash}).

\subsection{Pipeline Caching}
\label{sec:caching}
\begin{algorithm}[t]
\caption{Pipeline Caching}
\label{alg:caching}
\DontPrintSemicolon
\SetKwInOut{Input}{input}\SetKwInOut{Output}{output}

\Input{Data analytic pipeline $G$; input data $\mathcal{D}$; pipeline configuration $\bm{p}$ and $\bm{\lambda}_{\bm{p}}$ to be run; available caching budget $T_{cache}$; current cache pool $C$}

$\mathcal{D}^{(1)} \gets \mathcal{D}$ \;
\tcc{Run pipeline with cache pool}
\For {$k \gets 1, \ldots, K$}{
	$h \gets$ Hash($\bm{p}^{(1)} \ldots \bm{p}^{(k)}, \bm{\lambda}_{\bm{p}}^{(1)} \ldots \bm{\lambda}_{\bm{p}}^{(k)}$) \;
	\If {$h \in C$} {
		$\mathcal{D}^{(k+1)} \gets$ cached result from $C$\;
    }
	\Else { 
		$\mathcal{D}^{(k+1)} \gets$ RunAlgorithm($G, \mathcal{D}^{(k)}, \bm{p}^{(k)}, \bm{\lambda}_{\bm{p}}^{(k)}$) \;
		$C \gets C \cup \{\langle h, \mathcal{D}^{(k+1)} \rangle \}$ \;
	}
}
\tcc{Clean up cache pool when necessary}
\If {$T_{cache}$ exhausted} {
	Discard least recently used (LRU) items in $C$ \;
}
\end{algorithm}

During the experiments, we note that many pipeline runs have overlapped algorithms in their paths. Sometimes these algorithms have exactly the same pipeline path and the same hyperparameter settings along the path. This means that we are wasting time on generating the same intermediate output again and again. For example, consider the min-max normalization algorithm in the first pipeline step: this algorithm will be executed many times, especially when it performs well so that Bayesian optimization methods prefer to choose it.

To reduce this overhead, we propose a pipeline caching algorithm, as described in Algorithm~\ref{alg:caching}. When running a pipeline, we check the cache before we run each algorithm. If it turns out to be a cache hit, the result will be immediately returned from cache. Otherwise, we run the algorithm and cache the result. There is a caching pool (e.g., disk space, memory usage) for this algorithm. We use the Least Recently Used (LRU) strategy to clean up the caching pool when budget becomes exhausted.

Caching can significantly reduce the cost of pipeline runs, and accelerates all three phases of FLASH. Algorithms closer to the pipeline input vertex, usually the data preprocessing steps, have higher chance to hit the cache. In fact, when we deal with large datasets on real-world problems, the preprocessing step can be quite time-consuming such that caching can be very efficient.

\section{Benchmark Experiments}
\label{sec:exp}
In this section, we perform extensive experiments on a number of benchmark datasets to evaluate our algorithm compared to the existing approaches. Then we study the impact of different algorithm choices in each component of FLASH. 

\paragraph{Benchmark Datasets}

We conduct experiments on a group of public benchmark datasets on classification task, including \textsc{Madelon}, MNIST, MRBI and \textsc{Convex}\footnote{The benchmark datasets are publicly available at \url{http://www.cs.ubc.ca/labs/beta/Projects/autoweka/datasets}.}. These prominent datasets have been widely used to evaluate the effectiveness of Bayesian optimization methods \cite{thornton2013auto, feurer2015efficient, bergstra2011algorithms}. We follow the original train/test split of all the datasets. Test data will never be used during the optimization: the once and only usage of test data is for offline evaluations to determine the performance of optimized pipelines on unseen test set. In all benchmark experiments, we use percent error rate as the evaluation metric.

\paragraph{Baseline Methods}
As discussed in Section \ref{sec:back}, SMAC \cite{hutter2011sequential} and TPE \cite{bergstra2011algorithms} are the state-of-the-art algorithms for Bayesian optimization \cite{thornton2013auto,feurer2015efficient,komer2014hyperoptsklearn}, which are used as baselines. 
Note that Spearmint \cite{snoek2012practical}, a Bayesian optimization algorithm based on Gaussian process is not applicable since it does not provide a mechanism to handle the hierarchical space \cite{eggensperger2013towards}.
Besides SMAC and TPE, we also choose random search as a simple baseline for sanity check. Thus, we compare  both versions of our method FLASH (with SMAC in Phase 3) and FLASH$^\star$ (with TPE in Phase 3) against three baselines in the experiments.

\noindent{\bf Implementation:} To avoid possible mistakes in implementing other methods, we choose a general platform for hyperparameter optimization called HPOlib \cite{eggensperger2013towards}, which provides the original implementations of SMAC, TPE, and random search. In order to fairly compare our method with others, we also implement our algorithm on top of HPOlib, and evaluate all the compared methods on this platform. We make the source code of FLASH publicly available at \url{https://github.com/yuyuz/FLASH}.

\paragraph{Experimental Settings}

We build a general data analytic pipeline based on scikit-learn \cite{pedregosa2011scikit}, a popular used machine learning toolbox in Python. We follow the pipeline design of auto-sklearn \cite{feurer2015efficient}. There are four computational steps in our pipeline: 1) feature rescaling, 2) sample balancing, 3) feature preprocessing, and 4) classification model. Each step has various algorithms, and each algorithm has its own hyperparameters. Adjacent steps are fully connected. In total, our data analytic pipeline contains 33 algorithms distributed in four steps, creating 1,456 possible pipeline paths with 102 hyperparameters (30 categorical and 72 continuous), which creates complex high-dimensional and highly conditional search space. Details and statistics of this pipeline are listed in Table \ref{table:pipeline_setting} in Appendix \ref{sec:appendix}.

In all experiments, we set a wall-clock time limit of 10 hours for the entire optimization, 15 minutes time limit and 10GB RAM limit for each pipeline run. We perform 10 independent optimization runs with each baseline on each benchmark dataset. All experiments were run on Linux machines with Intel Xeon E5-2630 v3 eight-core processors at 2.40GHz with 256GB RAM. Since we ran experiments in parallel, to prevent potential competence in CPU resource, we use the \texttt{numactl} utility to bound each independent run in single CPU core.

For our algorithm FLASH, we set both $T_{init}$ and $T_{prune}$ as $30$ iterations (equal to the number of algorithms in the pipeline), which can be naturally generalized to other budgeted resources such as wall-clock time or money. We set $\xi$ to $100$ in the EIPS function. Note that the performance are not sensitive to the choices of those parameters. Finally, we set the number of pipeline paths $r$ to $10$ , which works well in generating a reasonable-size pruned pipeline $G'$.
In benchmark experiments, we compare the performance of FLASH without caching to other methods because the pipelines do not have complex data preprocessing data like many real-world datasets have. We will use caching for real-world experiments later in Section \ref{sec:real}. 

\subsection{Results and Discussions}

Table~\ref{table:benchmark_perf} reports the experimental results on benchmark datasets. For each dataset, we report the performance achieved within three different time budgets. As shown in the table, our methods FLASH and FLASH$^\star$ perform significantly better than other baselines consistently in all settings, in terms of both lower error rate and faster convergence. 
For example, on the \textsc{Madelon} dataset, our methods reach around 12\% test error in only 3 hours, while other baselines are still far from that even after 10 hours.

Performing statistical significance test via bootstrapping, we find that often FLASH and FLASH$^\star$ tie with each other on these benchmark datasets. For all the methods, the test error is quite consistent with the validation error, showing that the potential overfitting problem is well prevented by using cross validation. 

Figure~\ref{fig:exp_MRBI_test} plots the convergence curves of median test error rate along with time for all baseline methods. As shown in the figure, after running about 4 hours, FLASH and FLASH$^\star$ start to lead  others with steep drop of error rate, and then quickly converge on a superior performance.

\setlength\tabcolsep{3.5pt}
\begin{table*}[!ht]
\centering
\caption{Performance on both 3-fold cross-validation and test data of benchmark datasets. For each method, we perform 10 independent runs of 10 hours each. Results are reported as the median percent error across the 10 runs within different time budgets. Test data is never seen by any optimization method, which is only used for offline evaluations to compute test error rates. Boldface indicates the best result within a block of comparable methods. We underline those results not statistically significantly different from the best according to a 10,000 times bootstrap test with $p=0.05$.}
\label{table:benchmark_perf}
\vspace{5pt}
\footnotesize
\begin{tabular}{lccccccccccccc}
\toprule
\multirow{2}{*}{\textbf{Dataset}} & \multicolumn{1}{l}{\multirow{2}{*}{\textbf{\begin{tabular}[c]{@{}l@{}}Budget\\ (hours)\end{tabular}}}} & \multicolumn{1}{l}{\textbf{}} & \multicolumn{5}{c}{\textbf{Cross-validation Performance (\%)}} & \multicolumn{1}{l}{\textbf{}} & \multicolumn{5}{c}{\textbf{Test Performance (\%)}} \\ \cmidrule(l){3-14} 
 & \multicolumn{1}{l}{} & \multicolumn{1}{l}{} & \begin{tabular}[c]{@{}c@{}}\textsc{Rand.}\\ \textsc{Search}\end{tabular} & TPE & SMAC & FLASH & FLASH$^\star$ &  & \begin{tabular}[c]{@{}c@{}}\textsc{Rand.}\\ \textsc{Search}\end{tabular} & TPE & SMAC & FLASH & FLASH$^\star$ \\ \midrule
\multirow{3}{*}{\textsc{Madelon}} & 3 &  & 25.16 & 18.90 & 20.25 & {\ul 14.84} & \textbf{14.04} &  & 19.17 & 16.15 & 16.03 & {\ul 12.18} & \textbf{11.73} \\
 & 5 &  & 23.60 & 18.82 & 19.12 & {\ul 14.31} & \textbf{14.04} &  & 18.21 & 15.26 & 15.38 & {\ul 12.18} & \textbf{11.60} \\
 & 10 &  & 20.77 & 17.28 & 17.34 & {\ul 13.87} & \textbf{13.76} &  & 15.58 & 14.49 & 13.97 & {\ul 11.49} & \textbf{11.47} \\ \midrule
\multirow{3}{*}{MNIST} & 3 &  & 7.68 & 6.78 & 6.05 & \textbf{4.93} & {\ul 5.05} &  & 7.75 & 5.41 & 6.11 & \textbf{4.62} & {\ul 4.84} \\
 & 5 &  & 6.58 & 5.94 & 5.83 & \textbf{4.26} & {\ul 4.87} &  & 7.10 & 5.41 & 5.40 & \textbf{3.94} & {\ul 4.57} \\
 & 10 &  & 6.58 & 5.39 & 5.64 & \textbf{4.03} & {\ul 4.46} &  & 6.64 & 5.03 & 5.23 & \textbf{3.78} & {\ul 4.37} \\ \midrule
\multirow{3}{*}{MRBI} & 3 &  & 61.80 & 59.83 & 62.89 & {\ul 57.43} & \textbf{57.08} &  & 60.58 & 59.83 & 60.58 & {\ul 54.72} & \textbf{54.28} \\
 & 5 &  & 58.67 & 58.61 & 58.14 & \textbf{45.11} & 54.25 &  & 56.42 & 58.61 & 55.81 & \textbf{43.19} & 51.65 \\
 & 10 &  & 57.20 & 53.92 & 54.60 & \textbf{41.15} & {\ul 41.90} &  & 54.43 & 52.01 & 52.30 & \textbf{39.13} & {\ul 39.89} \\ \midrule
\multirow{3}{*}{\textsc{Convex}} & 3 &  & 28.14 & 24.70 & 24.69 & \textbf{22.63} & {\ul 23.31} &  & 25.04 & 21.42 & 21.97 & \textbf{20.65} & {\ul 21.04} \\
 & 5 &  & 25.25 & 23.61 & 23.30 & \textbf{21.34} & {\ul 22.02} &  & 23.18 & 21.37 & 20.82 & \textbf{19.56} & {\ul 19.71} \\
 & 10 & \multicolumn{1}{l}{} & 24.51 & 22.21 & 23.30 & \textbf{20.49} & {\ul 20.62} &  & 22.18 & 20.31 & 20.82 & \textbf{18.94} & {\ul 19.01} \\ \bottomrule
\end{tabular}
\end{table*}
\setlength\tabcolsep{6pt}

\begin{figure}[!ht]
\centering
\includegraphics[width=0.6\textwidth]{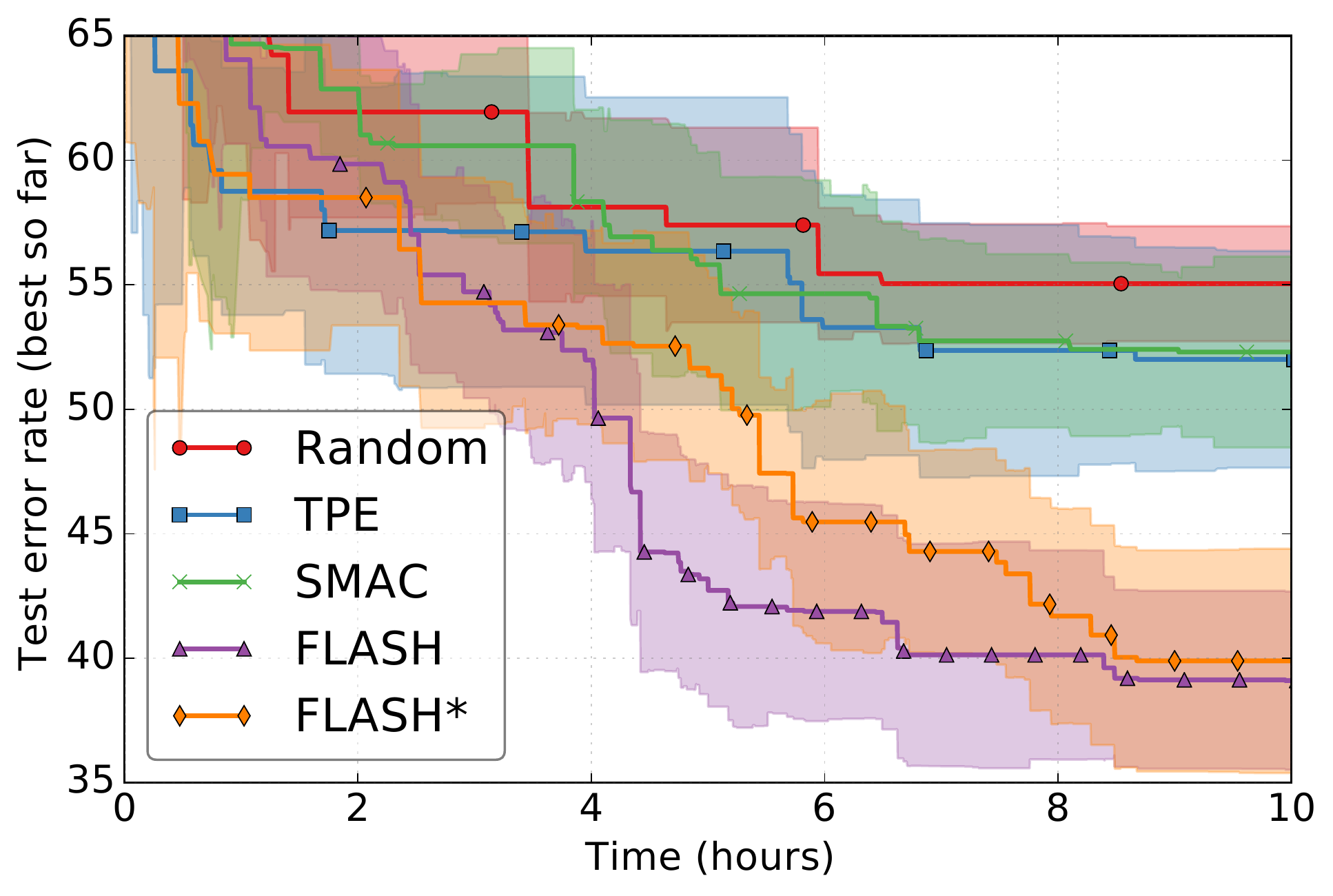}
\caption{Performance of our methods (FLASH and FLASH$^\star$) and other compared methods on MRBI dataset. We show the median percent error rate on test set along with standard error bars (generated by 10 independent runs) over time.}
\label{fig:exp_MRBI_test}
\end{figure}

\begin{figure*}[t]
\centering
\subfigure[The impact of optimal design \newline on MRBI dataset]{
\label{fig:exp_bench_optd}
\includegraphics[width=0.32\textwidth]{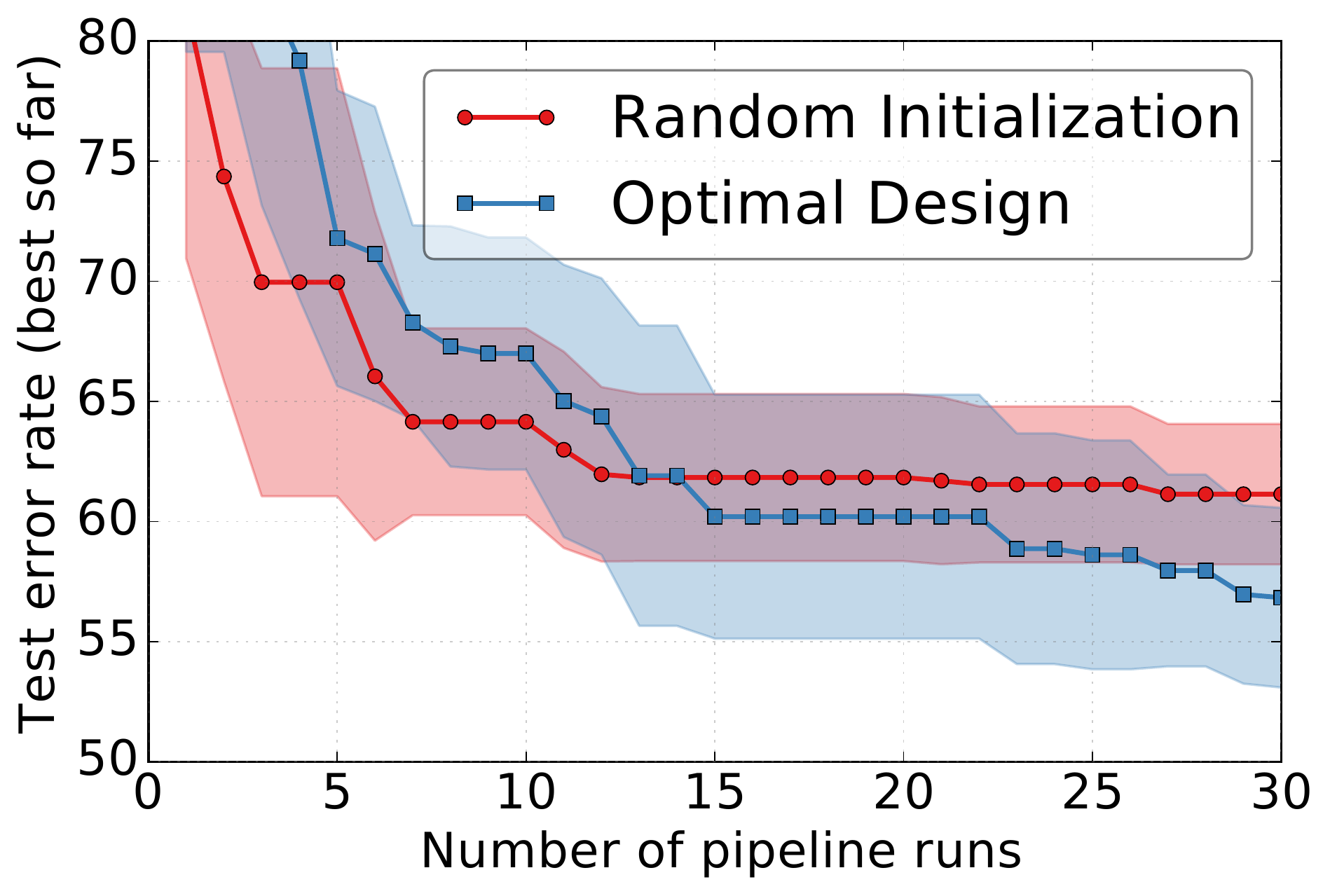}}
\subfigure[The impact of pipeline pruning \newline on MRBI dataset]{
\label{fig:exp_bench_prune}
\includegraphics[width=0.32\textwidth]{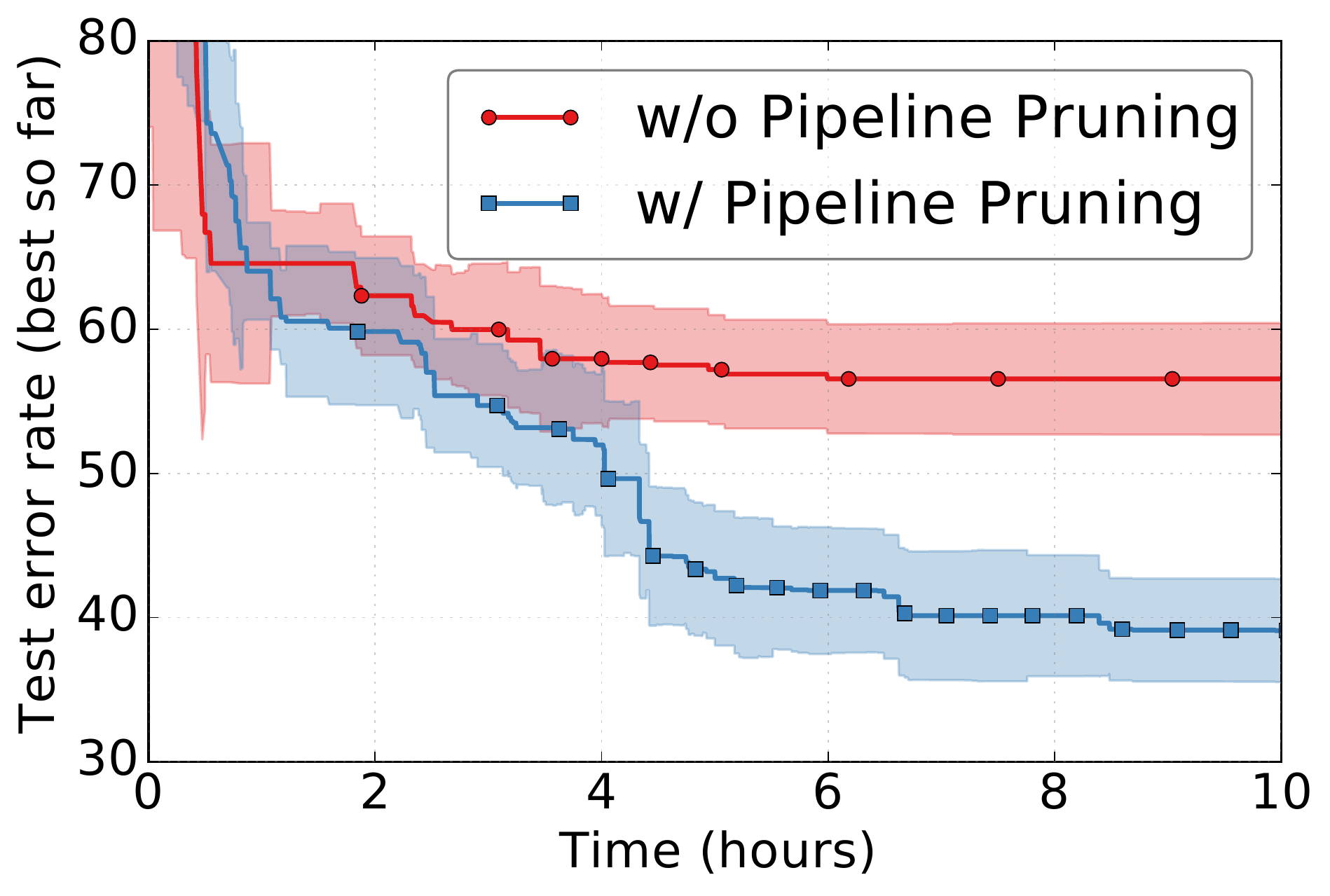}}
\subfigure[The impact of pipeline caching \newline on real-world dataset]{
\label{fig:cache}
\includegraphics[width=0.32\textwidth]{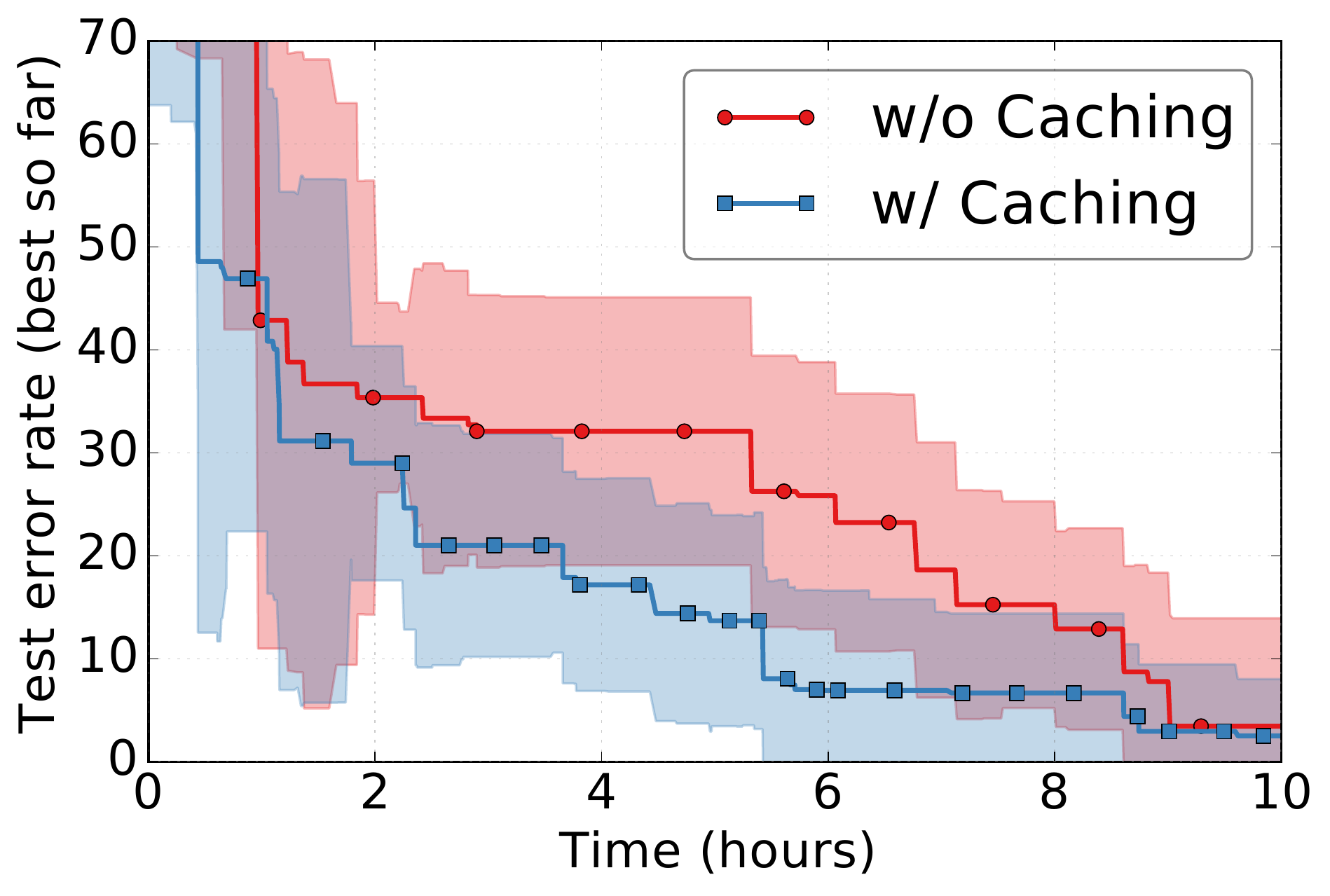}}
\caption{Component analysis experiments. \subref{fig:exp_bench_optd} Optimal design makes the initialization phase more robust. \subref{fig:exp_bench_prune} Pipeline pruning in the second phase of FLASH is the key to its superior performance. \subref{fig:cache} Performance of FLASH without caching and the original FLASH with caching on real-world dataset. In all figures, we show the median error rate on test set along with standard error bars (generated by 10 independent runs). Note that (a) and (b) are plotted with different x-axes; (c) is on a different dataset as (a) and (b).}
\label{fig:exp_bench}
\end{figure*}

\subsection{Detailed Study of FLASH Components}
FLASH has three main components: optimal design for initialization, cost-sensitive model for pipeline pruning, and pipeline caching. To study their individual contributions to the performance gain, we drop out each of the component and compare the performance with original FLASH. Since caching will be used for real-world experiments on large dataset, we describe the analysis of caching component in Section~\ref{sec:real}.  Here we use MRBI dataset for these experiments.

Figure~\ref{fig:exp_bench_optd} shows the difference between using random initialization and optimal design by plotting the performance on initial 30 pipeline runs. The desirable property of optimal design ensures to run reasonable pipeline paths, giving FLASH a head start at the beginning of optimization. While random initialization is not robust enough, especially when the number of pipeline runs is very limited and some algorithms will have no chance to run due to the randomness.
Figure~\ref{fig:exp_bench_prune} shows the impact of pipeline pruning in the second phase of FLASH. Dropping out the pruning phase with EIPS, and using SMAC immediately after Phase 1, we see a major degradation of the performance.
The figure clearly shows that in Phase 2 of FLASH, the linear model with EIPS acquisition function is able to efficiently shrink the search space significantly such that SMAC can focus on those algorithms which perform well with little cost.
This figure confirms the main idea of this paper that a simple linear model can be more effective in searching high-dimensional and highly conditional hyperparameter space.

\section{Real-world Experiments}
\label{sec:real}
In this section, to demonstrate a real-world use case, we apply FLASH on a large de-identified medical dataset for classifying drug non-responders.
We show how our method can quickly find good classifier for differentiating non-responders vs. responders.

\paragraph{Experimental Settings}
With a collaboration with a pharmaceutical company, we created  a balanced cohort of 46,455 patients from a large claim dataset. Patients who have at least 4 times of treatment failure are regarded as drug non-responders (case group). Other patients are responders of the drug (control group). The prediction target is whether a patient belongs to case group or control group. Each patient is associated with a sequence of events, where each event is a tuple of format \texttt{(patient-id, event-id, timestamp, value)}.
Table~\ref{tbl:ucb-data} summarizes the statistics of this clinical dataset, including the count of patient, event, medication, and medication class.

\begin{table}[!t]
\caption{Statistics of the medical dataset.}
\vspace{5pt}
\label{tbl:ucb-data}
\centering
\begin{tabular}{lrrrrr}
\toprule
 & \textbf{\#Patient} & \textbf{\#Event} & \textbf{\#Med} & \textbf{\#Class}\\ \midrule
\sc{train case}    & 18,581 & 982,025   & 434,171 & 547,854   \\
\sc{train control} & 18,582 & 622,777   & 286,198 & 336,579   \\
\sc{test case}     & 4,646  & 245,776   & 108,702 & 137,074   \\
\sc{test control}  & 4,646  & 153,303   & 70,395  & 82,908    \\
\sc{total}         & 46,455 & 2,003,881 & 899,466 & 1,104,415 \\ \bottomrule
\end{tabular}
\end{table}
\setlength\tabcolsep{6pt}

\begin{table}[!t]
\centering
\caption{Performance of real-world dataset. Results are reported using the same settings as Table~\ref{table:benchmark_perf}.}
\vspace{5pt}
\label{table:real_perf}
\begin{tabular}{ccccccc}
\toprule
\multicolumn{1}{l}{\multirow{2}{*}{\textbf{\begin{tabular}[c]{@{}l@{}}Budget\\ (hours)\end{tabular}}}} & \multicolumn{1}{l}{\textbf{}} & \multicolumn{5}{c}{\textbf{Test Performance (\%)}} \\ \cmidrule(l){2-7} 
\multicolumn{1}{l}{} &  & \begin{tabular}[c]{@{}c@{}}\textsc{Rand.}\\ \textsc{Search}\end{tabular} & TPE & SMAC & FLASH & FLASH$^\star$ \\ \midrule
3 &  & 30.32 & 27.03 & 35.40 & \textbf{21.02} & {\ul 23.28} \\
5 &  & 16.66 & 19.09 & 33.22 & \textbf{14.40} & 19.86 \\
10 &  & 11.75 & {\ul 4.86} & 21.03 & \textbf{2.51} & {\ul 3.44} \\ \bottomrule
\end{tabular}
\end{table}

Unlike benchmark experiments, the input to the real-world pipeline is not directly as feature vectors. Given a cohort of patients with their event sequences, like \cite{robert2015athsma} the pipeline for non-responder classification has two more additional steps than the pipeline described in previous benchmark experiments: 
1) \emph{Feature construction} to convert patient event sequence data into numerical feature vectors. This step can be quite time-consuming as advanced feature construction techniques like sequential mining \cite{kunal2015sequential} and tensor factorization \cite{ho2014marble} can be expensive to compute. On this medical dataset, we consider two kinds of parameters in this step: i) frequency threshold to remove rare events (frequency ranging from 2 to 5) ii) various aggregation functions (including binary, count, sum and average) to aggregate multiple occurrence of events into features. The output of this step will be a feature matrix and corresponding classification targets;
2) \emph{Densify} the feature matrix from above feature construction step if necessary. Features of this real-world dataset can be quite sparse. We by default use sparse representation to save space and accelerate computation in some algorithms. Unfortunately, not all algorithm implementations in scikit-learn accept sparse features. A decision has to be made here: either sparse matrix for faster result or dense matrix for broader algorithm choices in later steps.

We run the experiments on same machine, use same parameter setting and same budget as benchmark experiments. We compare our method with the same baselines as benchmark experiments and we continue using error rate as metric. 
Our algorithm has built-in caching mechanism and we will use that. For this real-world dataset, we first compare with baselines with cache enabled. Then we analyze the contribution of caching.

\subsection{Results and Discussions}

Table~\ref{table:real_perf} shows the performance of  our methods compared to baselines when caching is enabled. Due to lack of space we only report the test performance. All cases FLASH and FLASH$^\star$ significantly outperform all the baselines.

Figure~\ref{fig:cache} shows the performance of FLASH without caching and original FLASH with caching on the real-world  medical dataset. With caching, more pipeline paths can be evaluated within given period of time and our EIPS-based path selection leverages caching to select paths with high performance that run fast. As a result, we can see FLASH with caching converges much faster. For example, with caching we can get low test error within 6 hours.

\section{Conclusions}
\label{sec:future}
In this work, we propose a two-layer Bayesian optimization algorithm named FLASH, which enables highly efficient optimization of complex data analytic pipelines. We showed that all  components of FLASH complement each other: 1) our optimal design strategy ensures better initialization, giving a head start to the optimization procedure; 2) the cost-sensitive model takes advantage of this head start, and significantly improves the performance by pruning inefficient pipeline paths; 3) the pipeline caching reduces the cost during the entire optimization, which provides a global acceleration of our algorithm.
We demonstrate that our method significantly outperforms previous state-of-the-art approaches in both benchmark and real-world experiments.

\section{Acknowledgments}
This work was supported by the National Science Foundation, award IIS- \#1418511 and CCF-\#1533768, research partnership between Children's Healthcare of Atlanta and the Georgia Institute of Technology, CDC I-SMILE project, Google Faculty Award, Sutter health and UCB.

\clearpage
\bibliography{references}

\begin{thebibliography}{10}

\bibitem{atkinson2012optimum}
A.~Atkinson and A.~Donev.
\newblock Optimum experimental designs.
\newblock 1992.

\bibitem{back1996evolutionary}
T.~B{\"a}ck.
\newblock Evolutionary algorithms in theory and practice.
\newblock 1996.

\bibitem{bergstra2012random}
J.~Bergstra and Y.~Bengio.
\newblock Random search for hyper-parameter optimization.
\newblock {\em JMLR}, 13(1), 2012.

\bibitem{bergstra2013making}
J.~Bergstra, D.~Yamins, and D.~Cox.
\newblock Making a science of model search: Hyperparameter optimization in
  hundreds of dimensions for vision architectures.
\newblock In {\em ICML}, 2013.

\bibitem{bergstra2011algorithms}
J.~S. Bergstra, R.~Bardenet, Y.~Bengio, and B.~K{\'e}gl.
\newblock Algorithms for hyper-parameter optimization.
\newblock In {\em NIPS}, 2011.

\bibitem{boyd2004convex}
S.~Boyd and L.~Vandenberghe.
\newblock {\em Convex optimization}.
\newblock Cambridge university press, 2004.

\bibitem{brochu2010tutorial}
E.~Brochu, V.~M. Cora, and N.~De~Freitas.
\newblock A tutorial on bayesian optimization of expensive cost functions, with
  application to active user modeling and hierarchical reinforcement learning.
\newblock {\em arXiv preprint arXiv:1012.2599}, 2010.

\bibitem{calinescu2011maximizing}
G.~Calinescu, C.~Chekuri, M.~P{\'a}l, and J.~Vondr{\'a}k.
\newblock Maximizing a monotone submodular function subject to a matroid
  constraint.
\newblock {\em SIAM Journal on Computing}, 40(6), 2011.

\bibitem{robert2015athsma}
R.~Chen, H.~Su, Y.~Zhen, M.~Khalilia, D.~Hirsch, M.~Thompson, T.~Davis,
  Y.~Peng, S.~Lin, J.~Tejedor-Sojo, E.~Searles, and J.~Sun.
\newblock Cloud-based predictive modeling system and its application to asthma
  readmission prediction.
\newblock In {\em AMIA}. AMIA, 2015.

\bibitem{coakes2009spss}
S.~J. Coakes and L.~Steed.
\newblock {\em SPSS: Analysis without anguish using SPSS version 14.0 for
  Windows}.
\newblock John Wiley \& Sons, Inc., 2009.

\bibitem{donoho2015}
D.~Donoho.
\newblock {50 years of Data Science}.
\newblock Technical report, University of California Berkeley, 2015.

\bibitem{eggensperger2013towards}
K.~Eggensperger, M.~Feurer, F.~Hutter, J.~Bergstra, J.~Snoek, H.~Hoos, and
  K.~Leyton-Brown.
\newblock Towards an empirical foundation for assessing bayesian optimization
  of hyperparameters.
\newblock In {\em NIPS workshop on Bayesian Optimization in Theory and
  Practice}, 2013.

\bibitem{feurer2015efficient}
M.~Feurer, A.~Klein, K.~Eggensperger, J.~Springenberg, M.~Blum, and F.~Hutter.
\newblock Efficient and robust automated machine learning.
\newblock In {\em NIPS}, 2015.

\bibitem{feurer2015initializing}
M.~Feurer, T.~Springenberg, and F.~Hutter.
\newblock Initializing bayesian hyperparameter optimization via meta-learning.
\newblock In {\em AAAI}, 2015.

\bibitem{flaherty2005robust}
P.~Flaherty, A.~Arkin, and M.~I. Jordan.
\newblock Robust design of biological experiments.
\newblock In {\em NIPS}, 2005.

\bibitem{flynn2013efficiency}
C.~J. Flynn, C.~M. Hurvich, and J.~S. Simonoff.
\newblock Efficiency for regularization parameter selection in penalized
  likelihood estimation of misspecified models.
\newblock {\em JASA}, 108(503), 2013.

\bibitem{hall2009weka}
M.~Hall, E.~Frank, G.~Holmes, B.~Pfahringer, P.~Reutemann, and I.~H. Witten.
\newblock The weka data mining software: an update.
\newblock {\em ACM SIGKDD explorations newsletter}, 11(1), 2009.

\bibitem{he2010laplacian}
X.~He.
\newblock Laplacian regularized d-optimal design for active learning and its
  application to image retrieval.
\newblock {\em Image Processing, IEEE Transactions on}, 19(1), 2010.

\bibitem{ho2014marble}
J.~C. Ho, J.~Ghosh, and J.~Sun.
\newblock Marble: high-throughput phenotyping from electronic health records
  via sparse nonnegative tensor factorization.
\newblock In {\em KDD}, 2014.

\bibitem{hoffman2011portfolio}
M.~D. Hoffman, E.~Brochu, and N.~de~Freitas.
\newblock Portfolio allocation for bayesian optimization.
\newblock In {\em UAI}. Citeseer, 2011.

\bibitem{hoffman2014correlation}
M.~D. Hoffman, B.~Shahriari, and N.~de~Freitas.
\newblock On correlation and budget constraints in model-based bandit
  optimization with application to automatic machine learning.
\newblock In {\em AISTATS}, 2014.

\bibitem{hutter2011sequential}
F.~Hutter, H.~H. Hoos, and K.~Leyton-Brown.
\newblock Sequential model-based optimization for general algorithm
  configuration.
\newblock In {\em Learning and Intelligent Optimization}. Springer, 2011.

\bibitem{jaggi2013revisiting}
M.~Jaggi.
\newblock Revisiting frank-wolfe: Projection-free sparse convex optimization.
\newblock In {\em ICML}, 2013.

\bibitem{komer2014hyperoptsklearn}
B.~Komer, J.~Bergstra, and C.~Eliasmith.
\newblock Hyperoptsklearn: Automatic hyperparameter configuration for
  scikitlearn.
\newblock In {\em ICML workshop on AutoML}, 2014.

\bibitem{krause2011submodularity}
A.~Krause and C.~Guestrin.
\newblock Submodularity and its applications in optimized information
  gathering.
\newblock {\em TIST}, 2(4), 2011.

\bibitem{kumar2015model}
A.~Kumar, R.~McCann, J.~Naughton, and J.~M. Patel.
\newblock Model selection management systems: The next frontier of advanced
  analytics.
\newblock {\em ACM SIGMOD Record}, 2015.

\bibitem{lizotte2008practical}
D.~J. Lizotte.
\newblock {\em Practical bayesian optimization}.
\newblock University of Alberta, 2008.

\bibitem{lv2014model}
J.~Lv and J.~S. Liu.
\newblock Model selection principles in misspecified models.
\newblock {\em JRSS-B}, 76(1), 2014.

\bibitem{kunal2015sequential}
K.~Malhotra, T.~Hobson, S.~Valkova, L.~Pullum, and A.~Ramanathan.
\newblock Sequential pattern mining of electronic healthcare reimbursement
  claims: Experiences and challenges in uncovering how patients are treated by
  physicians.
\newblock In {\em Big Data}, Oct 2015.

\bibitem{mengml}
X.~Meng, J.~Bradley, E.~Sparks, and S.~Venkataraman.
\newblock Ml pipelines: a new high-level api for mllib, 2015.

\bibitem{mierswa2006yale}
I.~Mierswa, M.~Wurst, R.~Klinkenberg, M.~Scholz, and T.~Euler.
\newblock Yale: Rapid prototyping for complex data mining tasks.
\newblock In {\em KDD}, 2006.

\bibitem{mockus1978application}
J.~Mockus, V.~Tiesis, and A.~Zilinskas.
\newblock The application of bayesian methods for seeking the extremum.
\newblock {\em Towards Global Optimization}, 2(117-129), 1978.

\bibitem{munos2011optimistic}
R.~Munos.
\newblock Optimistic optimization of deterministic functions without the
  knowledge of its smoothness.
\newblock In {\em Advances in neural information processing systems}, 2011.

\bibitem{nesterov2007gradient}
Y.~Nesterov.
\newblock Gradient methods for minimizing composite objective function, 2007.

\bibitem{ng2014paramo}
K.~Ng, A.~Ghoting, S.~R. Steinhubl, W.~F. Stewart, B.~Malin, and J.~Sun.
\newblock Paramo: A parallel predictive modeling platform for healthcare
  analytic research using electronic health records.
\newblock {\em Journal of biomedical informatics}, 48, 2014.

\bibitem{pedregosa2011scikit}
F.~Pedregosa, G.~Varoquaux, A.~Gramfort, V.~Michel, B.~Thirion, O.~Grisel,
  M.~Blondel, P.~Prettenhofer, R.~Weiss, V.~Dubourg, et~al.
\newblock Scikit-learn: Machine learning in python.
\newblock {\em JMLR}, 12, 2011.

\bibitem{robertazzi1989accelerated}
T.~Robertazzi and S.~Schwartz.
\newblock An accelerated sequential algorithm for producing d-optimal designs.
\newblock {\em SIAM Journal on Scientific and Statistical Computing}, 10(2),
  1989.

\bibitem{Settles2012}
B.~Settles.
\newblock {Active Learning}.
\newblock {\em Synthesis Lectures on Artificial Intelligence and Machine
  Learning}, 6(1), jun 2012.

\bibitem{shahriari2016taking}
B.~Shahriari, K.~Swersky, Z.~Wang, R.~P. Adams, and N.~de~Freitas.
\newblock Taking the human out of the loop: A review of bayesian optimization.
\newblock {\em Proceedings of the IEEE}, 104(1):148--175, 2016.

\bibitem{shamaiah2010greedy}
M.~Shamaiah, S.~Banerjee, and H.~Vikalo.
\newblock Greedy sensor selection: Leveraging submodularity.
\newblock In {\em CDC}, 2010.

\bibitem{snoek2012practical}
J.~Snoek, H.~Larochelle, and R.~P. Adams.
\newblock Practical bayesian optimization of machine learning algorithms.
\newblock In {\em NIPS}, 2012.

\bibitem{snoek2015scalable}
J.~Snoek, O.~Rippel, K.~Swersky, R.~Kiros, N.~Satish, N.~Sundaram, M.~Patwary,
  M.~Ali, R.~P. Adams, et~al.
\newblock Scalable bayesian optimization using deep neural networks.
\newblock {\em arXiv preprint arXiv:1502.05700}, 2015.

\bibitem{srinivas2009gaussian}
N.~Srinivas, A.~Krause, S.~M. Kakade, and M.~Seeger.
\newblock Gaussian process optimization in the bandit setting: No regret and
  experimental design.
\newblock {\em arXiv preprint arXiv:0912.3995}, 2009.

\bibitem{thornton2013auto}
C.~Thornton, F.~Hutter, H.~H. Hoos, and K.~Leyton-Brown.
\newblock Auto-weka: Combined selection and hyperparameter optimization of
  classification algorithms.
\newblock In {\em KDD}, 2013.

\bibitem{villemonteix2009informational}
J.~Villemonteix, E.~Vazquez, and E.~Walter.
\newblock An informational approach to the global optimization of
  expensive-to-evaluate functions.
\newblock {\em Journal of Global Optimization}, 44(4), 2009.

\bibitem{wang2013bayesian}
Z.~Wang, M.~Zoghi, F.~Hutter, D.~Matheson, and N.~De~Freitas.
\newblock Bayesian optimization in high dimensions via random embeddings.
\newblock In {\em IJCAI}. Citeseer, 2013.

\bibitem{wasserman2006all}
L.~Wasserman.
\newblock {\em All of nonparametric statistics}.
\newblock Springer Science \& Business Media, 2006.

\bibitem{white1982maximum}
H.~White.
\newblock Maximum likelihood estimation of misspecified models.
\newblock {\em Econometrica}, 1982.

\end{thebibliography}
\bibliographystyle{abbrv}

\clearpage
\appendix
\section{Appendix}
\label{sec:appendix}
\begin{table*}[!ht]
\centering
\caption{Detailed information of the data analytic pipeline constructed for benchmark experiments.}
\label{table:pipeline_setting}
\vspace{5pt}
\begin{tabular}{L{70pt}L{135pt}ccC{55pt}}
\toprule
\textbf{Pipeline step} & \textbf{Algorithm} & \textbf{\#categorical} & \textbf{\#continuous} & \textbf{Total \#hyper-parameters} \\ \midrule
\multirow{4}{*}{\begin{tabular}[c]{@{}l@{}}Feature \\ rescaling\end{tabular}} & Min-max scaler & 0 & 0 & 0 \\
& None & 0 & 0 & 0 \\
& Normalization & 0 & 0 & 0 \\
& Standardization & 0 & 0 & 0 \\ \midrule
\multirow{2}{*}{\begin{tabular}[c]{@{}l@{}}Sample \\ balancing\end{tabular}} & Class weighting & 0 & 0 & 0 \\
& None & 0 & 0 & 0 \\ \midrule
\multirow{13}{*}{\begin{tabular}[c]{@{}l@{}}Feature \\ preprocessing\end{tabular}} & Extremely randomized trees & 2 & 3 & 5 \\
& Fast ICA & 3 & 1 & 4 \\
& Feature agglomeration & 2 & 1 & 3 \\
& Kernel PCA & 1 & 6 & 7 \\
& Random kitchen sinks & 0 & 2 & 2 \\
& Linear SVM & 0 & 2 & 2 \\
& None & 0 & 0 & 0 \\
& Nystroem sampler & 1 & 8 & 9 \\
& PCA & 1 & 1 & 2 \\
& Polynomial combinations & 1 & 2 & 3 \\
& Random trees embedding & 0 & 4 & 4 \\
& Percentile feature selection & 1 & 1 & 2 \\
& Univariate feature selection & 1 & 2 & 3 \\ \midrule
\multirow{14}{*}{Classification} & AdaBoost & 1 & 3 & 4 \\
& Decision tree & 1 & 3 & 4 \\
& Extremely randomized trees & 2 & 3 & 5 \\
& Gaussian naive Bayes & 0 & 0 & 0 \\
& Gradient boosting & 0 & 6 & 6 \\
& K-nearest neighbors & 2 & 1 & 3 \\
& LDA & 1 & 3 & 4 \\
& Linear SVM & 0 & 2 & 2 \\
& Kernel SVM & 2 & 5 & 7 \\
& Multinomial naive Bayes & 1 & 1 & 2 \\
& Passive aggressive & 1 & 2 & 3 \\
& QDA & 0 & 1 & 1 \\
& Random forest & 2 & 3 & 5 \\
& Stochastic gradient descent & 4 & 6 & 10 \\ \midrule
\textbf{Total\#} & 33 & 30 & 72 & 102 \\ \bottomrule
\end{tabular}
\end{table*}
\clearpage

\end{document}